\documentclass[letterpaper]{article} 
\usepackage[]{aaai2026}  

\usepackage{booktabs}

\usepackage[english]{babel}

\usepackage{graphicx}

\usepackage{bm}
\usepackage{textcomp}
\usepackage{amssymb}
\usepackage{amsmath}
\usepackage{amsthm}
\usepackage{amsfonts}
\usepackage{mathtools}
\usepackage{pifont}

\usepackage{algorithm}
\usepackage{algpseudocode}

\usepackage{textcomp}
\usepackage{multirow}
\usepackage{subcaption}

\usepackage{microtype}
\usepackage{mdframed}

\usepackage{tabularx, colortbl, xcolor}
\definecolor{4A7BB7}{HTML}{4A7BB7}  
\definecolor{91C4DE}{HTML}{91C4DE}  
\definecolor{FED081}{HTML}{FED081}  
\definecolor{F7834D}{HTML}{F7834D}  
\definecolor{8ECFC9}{HTML}{8ECFC9}  
\definecolor{FFBE7A}{HTML}{FFBE7A}  
\definecolor{FA7F6F}{HTML}{FA7F6F}  
\definecolor{82B0D2}{HTML}{82B0D2}  


\definecolor{D62728}{HTML}{D62728}  
\definecolor{E13333}{HTML}{E13333}  
\definecolor{C63D52}{HTML}{C63D52}  
\definecolor{F67088}{HTML}{F67088}  
\definecolor{F77189}{HTML}{F77189}  
\definecolor{FFB3B2}{HTML}{FFB3B2}  

\definecolor{BF3E87}{HTML}{BF3E87}  
\definecolor{F584C5}{HTML}{F584C5}  
\definecolor{E377C2}{HTML}{E377C2}  
\definecolor{9467BD}{HTML}{9467BD}  
\definecolor{7B7AB7}{HTML}{7b7ab7}  
\definecolor{1B2278}{HTML}{1B2278}  
\definecolor{4E518B}{HTML}{4E518B}  

\definecolor{005397}{HTML}{005397}  
\definecolor{023047}{HTML}{023047}  
\definecolor{1C3168}{HTML}{1C3168}  
\definecolor{327BB7}{HTML}{327bb7}  
\definecolor{1F77B4}{HTML}{1F77B4}  
\definecolor{3BA3EC}{HTML}{3BA3EC}  
\definecolor{90C9E6}{HTML}{90C9E6}  
\definecolor{94C3E1}{HTML}{94C3E1}  
\definecolor{A2B1C3}{HTML}{A2B1C3}  

\definecolor{0192A8}{HTML}{0192A8}  
\definecolor{17BECF}{HTML}{17BECF}  
\definecolor{219EBC}{HTML}{219EBC}  
\definecolor{36ADA4}{HTML}{36ADA4}  
\definecolor{2CA02C}{HTML}{2CA02C}  
\definecolor{AEDB89}{HTML}{AEDB89}  
\definecolor{B3FEAE}{HTML}{B3FEAE}  

\definecolor{BCBD22}{HTML}{BCBD22}  
\definecolor{C99E31}{HTML}{C99E31}  
\definecolor{F0CF61}{HTML}{F0CF61}  
\definecolor{BC9F48}{HTML}{BC9F48}  

\definecolor{8C564B}{HTML}{8C564B}  
\definecolor{BEA1A5}{HTML}{BEA1A5}  

\definecolor{BFB5D7}{HTML}{BFB5D7}  
\definecolor{FFD8B0}{HTML}{FFD8B0}  
\definecolor{F3EEE3}{HTML}{F3EEE3}  
\definecolor{D9D9D9}{HTML}{D9D9D9}  

\definecolor{000000}{HTML}{000000}  
\definecolor{333333}{HTML}{333333}  



\usepackage[capitalize,noabbrev]{cleveref}

\theoremstyle{plain}
\newtheorem{theorem}{Theorem}[section]

\newtheorem{lemma}[theorem]{Lemma}
\newtheorem{corollary}[theorem]{Corollary}
\theoremstyle{definition}

\theoremstyle{remark}
\newtheorem{example}{Example}[]




\usepackage{braket}
\newcommand{\norm}[1]{\left\lVert#1\right\rVert}
\newcommand{\abs}[1]{\left\lvert#1\right\rvert}

\usepackage{stmaryrd}
\newcommand{\iverson}[1]{\left\llbracket#1\right\rrbracket}

\usepackage{tikz}
\usetikzlibrary{trees}
\usepackage{tikz-dependency}
\usepackage{tikz-3dplot}
\usetikzlibrary{chains,scopes}
\usetikzlibrary{arrows.meta,automata,positioning}
\usetikzlibrary{shapes.geometric, arrows}
\usepackage{pgfplots}

\usepgfplotslibrary{fillbetween}

\pgfplotsset{
  every axis/.append style = {thick},
  tick style = {thick,black},
  %
  /tikz/normal shift/.code 2 args = {%
    \pgftransformshift{%
        \pgfpointscale{#2}{\pgfplotspointouternormalvectorofticklabelaxis{#1}}%
    }%
  },%
  shift/.style = {
    tick align        = outside,
    scaled ticks      = false,
    enlargelimits     = false,
    ticklabel shift   = {#1},
    axis lines*       = left,
    xtick style       = {normal shift={x}{#1}},
    ytick style       = {normal shift={y}{#1}},
    x axis line style = {normal shift={x}{#1}},
    y axis line style = {normal shift={y}{#1}},
  },
  shift/.default = 10pt,
  shift3d/.style = {
    shift=#1,
    ztick style       = {normal shift={z}{#1}},
    z axis line style = {normal shift={z}{#1}},
  },
  shift3d/.default = 10pt,
}

\tikzstyle{startstop} = [rectangle, rounded corners, minimum width=1cm, minimum height=0.5cm,text centered, draw=black]
\tikzstyle{io} = [trapezium, trapezium left angle=70, trapezium right angle=110, minimum width=2cm, minimum height=0.63cm, text centered, draw=black]
\tikzstyle{process} = [rectangle, minimum width=2cm, minimum height=0.5cm, text centered,draw=black]
\tikzstyle{process_n} = [rectangle, minimum width=1.5cm, minimum height=0.5cm, text centered,text width=1.5cm, draw=black]
\tikzstyle{decision} = [diamond, minimum width=1.2cm, minimum height=0.5cm, text width=1.5cm, text centered, aspect=1.5, draw=black]
\tikzstyle{arrow} = [->,>=stealth]



\usepackage[most]{tcolorbox}
\newtcolorbox[auto counter]{summary}[1][]{title={\bfseries Summary~\thetcbcounter},enhanced,
  coltitle=black,
  colback=white,
  top=0.3in,
  attach boxed title to top left=
  {xshift=1.5em,yshift=-\tcboxedtitleheight/2},boxrule=0.5pt,  sharp corners, fonttitle=\bfseries,boxed title style={size=small,colback=white,colframe=white},#1}
\usepackage{times}  
\usepackage{helvet}  
\usepackage{courier}  
\usepackage[hyphens]{url}  
\usepackage{graphicx} 
\urlstyle{rm} 
\usepackage{natbib}  
\usepackage{caption} 
\frenchspacing  
\setlength{\pdfpagewidth}{8.5in} 
\setlength{\pdfpageheight}{11in} 
%

%
\usepackage{newfloat}
\usepackage{listings}
\DeclareCaptionStyle{ruled}{labelfont=normalfont,labelsep=colon,strut=off} 
\lstset{%
	basicstyle={\footnotesize\ttfamily},
	numbers=left,numberstyle=\footnotesize,xleftmargin=2em,
	aboveskip=0pt,belowskip=0pt,%
	showstringspaces=false,tabsize=2,breaklines=true}
\floatstyle{ruled}
\newfloat{listing}{tb}{lst}{}
\floatname{listing}{Listing}
%
\pdfinfo{
/TemplateVersion (2026.1)
}

\setcounter{secnumdepth}{2} 

%


\title{Towards Provably Unlearnable Examples via Bayes Error Optimization}
\author{
    Ruihan Zhang, Jun Sun, Ee-Peng Lim, Peixin Zhang\thanks{Corresponding author.}
}
\affiliations{
    \textsuperscript{\rm 1}Singapore Management University\\


    81 Victoria St, Singapore, Singapore 188065\\
    pxzhang@smu.edu.sg
%
}

\usepackage{bibentry}

\begin{document}

\maketitle

\begin{abstract}
The recent success of machine learning models, especially large-scale classifiers and language models, relies heavily on training with massive data. These data are often collected from online sources. This raises serious concerns about the protection of user data, as individuals may not have given consent for their data to be used in training. To address this concern, recent studies introduce the concept of unlearnable examples, i.e., data instances that appear natural but are intentionally altered to prevent models from effectively learning from them. While existing methods demonstrate empirical effectiveness, they typically rely on heuristic trials and lack formal guarantees. Besides, when unlearnable examples are mixed with clean data, as is often the case in practice, their unlearnability disappears. In this work, we propose a novel approach to constructing unlearnable examples by systematically maximising the Bayes error, a measurement of irreducible classification error. We develop an optimisation-based approach and provide an efficient solution using projected gradient ascent. Our method provably increases the Bayes error and remains effective when the unlearning examples are mixed with clean samples. Experimental results across multiple datasets and model architectures are consistent with our theoretical analysis and show that our approach can restrict data learnability, effectively in practice. 

\end{abstract}


\section{Introduction}

In recent years, machine learning models have demonstrated remarkable performance across a wide range of tasks, driven by large-scale datasets and powerful computational resources~\cite{kaplan2020scaling,he2016deep}. However, this data-centric mechanism also raises serious concerns regarding intellectual property and control over data usage~\cite{yeom2018privacy,huang2021unlearnable}. Once data is shared or collected, often without users' full awareness, it becomes extremely difficult to restrict its subsequent usage or to prevent it from being exploited by unintended parties. For example, social media users are happy to share personal photos on blogs, yet still wish to prevent unauthorised third parties from leveraging these images for model training.

Motivated by the need for data protection at the source, recent studies have introduced the concept of unlearnable examples, a proactive defence mechanism that applies imperceptible perturbations to clean data, rendering it unlearnable by machine learning models~\cite{huang2021unlearnable}. Returning to the earlier example, a user can apply subtle, human-imperceptible perturbations to their photos before sharing them, thereby ensuring that any model trained on these perturbed images performs poorly when attempting to recognise or analyse the individual's face in unaltered images~\cite{wen2023adversarial}. This technique not only enables data owners to regain control over their data but also introduces a novel perspective in machine learning by designing data that actively resists learning.

Prior works for creating unlearnable examples take two main directions. One line of research perturbs data to increase training loss, aiming to obstruct the model from extracting useful patterns~\cite{fowl2021adversarial}. Another line encourages overfitting by minimising training error, thereby causing models to overfit non-generalisable features~\cite{fu2022robust}. While existing methods show empirical effectiveness, they typically require altering the entire training set. In practice, however, unlearnable examples are mixed with clean data from diverse sources~\cite{liu2023reliable,nakashima2022can,seddik2022neural}. In such cases, for example, training a model on unlearnable examples (CIFAR-10~\cite{krizhevsky2009learning}) may yield 20.50\% test accuracy. But when these unlearnable examples are mixed with the same number of clean examples for training, the test accuracy can be boosted to 92.51\%. This greatly limits the effectiveness of existing methods~\cite{huang2021unlearnable}.

In addition, a formal guarantee is underrated in existing methods, i.e., these strategies are heuristic instead of from a systematic derivation of unlearnability. This absence of theoretical grounding poses risks to interpretability and reliability, especially in safety-critical settings~\cite{wu2018beyond}. Therefore, there is a pressing need for a clear framework that not only provides theoretical guarantees for unlearnability but also remains effective under realistic conditions where clean and unlearnable data coexist.

In this work, we set out from a statistical learning concept of Bayes error, which is the classification error rate of the Maximum A Posterior rule~\cite{fukunaga1990introduction}. Essentially, Bayes error quantifies the inherent difficulty of classification with a given data distribution. Here, we leverage Bayes error as an effective measurement of data unlearnability, i.e., higher Bayes error means more unlearnable. Then, by systematically increasing the Bayes error through constrained perturbations, we can consequently ensure that the perturbed examples become provably harder to learn from, regardless of the training algorithm. Furthermore, this method remains effective even when unlearnable examples are mixed with clean samples.

To increase the Bayes error, we set up an optimisation problem to maximise the estimated Bayes error under norm-bounded constraints (to maintain the data quality). When solving this optimisation problem, we present an efficient solution with projected gradient ascent~\cite{bertsekas2003goldstein}. We show through a theoretical analysis that the Bayes error is guaranteed to increase. We then evaluate our method and implementation through extensive empirical studies. Experiment results demonstrate that our approach consistently leads to a test accuracy drop. For example, on a CIFAR-10 training set with 50\% clean and 50\% unlearnable examples, training on just the clean half gives 91.16\% test accuracy. Adding unlearnable examples from existing methods increases accuracy to 92.51\%, which defeats the purpose of being unlearnable. In contrast, using ours drops it to 69.68\%. This shows our method effectively produces unlearnable examples. Our code is available ~\cite{code}.

\section{Preliminaries and Problem Definition}

We first describe the machine learning background, including distributions, learning algorithms, and the concept of unlearnable examples. Then, we define our research problem.

\paragraph{Notations in machine learning}
Let $D$ denote a joint probability distribution for random variables $\mathbf{x}\in\mathbb{X}$ and $\mathbf{y}\in\mathbb{Y}$, such that for some positive integer $n$, a sample from $D$ is captured by $\left\{ (x_1, y_1), (x_2, y_2), \ldots, (x_n, y_n)\right\}$  or $\{(x_i, y_i)\}_{i=1}^n \sim D$, where $n$ is the sample size.

When it comes to machine learning regarding $D$, we consider $\mathbb{X}$ an input feature space, and consider $\mathbb{Y}$ a label space. Further, we would denote a model (learner) function $h: \mathbb{X} \to \mathbb{Y}$ that predicts outputs $h(x)\in \mathbb{Y}$ based on a (possibly high-dimensional) input point $x\in\mathbb{X}$. The quality of a model can be measured through $\operatorname{E}_{\mathbf{x}, \mathbf{y}\sim D}\left[\ell (h, \mathbf{x}, \mathbf{y}) \right]$ with a problem-dependent loss function $\ell(h, x, y)$~\cite{zhang2004statistical}.

\paragraph{Unlearnable examples}

In this work, we explore a scenario in which some of the training data needs to be protected from being learned by the model. A defender can read the original training dataset $\{(x_i, y_i)\}_{i=1}^n\sim D$, and can apply small perturbations $\Delta x_i$ to these samples. Typically, we have $\norm{\Delta x_i}_p\le \epsilon$ ($\epsilon>0$ caps the perturbation range within an $L^p$-norm) such that the data quality is not affected for human perception. This results in dataset $\{(x_i', y_i)\}_{i=1}^n$, where $\forall i\in 1, \ldots, n,  x_i' = x_i + \Delta x_i$. The goal is that when an arbitrary model trains on $\{(x_i', y_i)\}_{i=1}^n$ (and possibly with additional data other than \( \{(x_i, y_i)\}_{i=1}^n \)), the resultant model performs badly in testing (or at least worse that a model trained without the perturbed examples). Formally, let $h$ be a model  selected by arbitrary learning algorithm $\Gamma(\cdot)$ such that $h=\Gamma(x_1 + \Delta x_1, y_1, \ldots, x_n + \Delta x_n, y_n)$, and this goal could be expressed as
\begin{equation}
\label{eq:goal}
\begin{aligned}
    \max &\min_{h\in \mathbb{H}} \operatorname{E}_{\mathbf{x}, \mathbf{y}\sim D}\left[\ell (h, \mathbf{x}, \mathbf{y}) \right],\\
    \text{s.t.}&\quad \forall i\in 1, 2, \ldots, n, \quad \norm{\Delta x_i}_p\le \epsilon
\end{aligned}
\end{equation}
where $\mathbb{H}$ is the space of possible models that can be selected. Optimisation problem~(\ref{eq:goal}) represents a bilevel optimisation formulation, i.e., the minimisation here trains $h$, and maximisation manipulates $\Delta x_i$ to prevent trained $h$ from performing well in testing.

In practice, instead of solving optimisation problem~(\ref{eq:goal}), two alternative optimisation problems are tackled instead, as shown in optimisation problem~(\ref{eq:maxmin}) and optimisation problem~(\ref{eq:minmin}).
\begin{equation}
\label{eq:maxmin}
    \max_{\Delta x_i} \min_{h\in \set{\mathbb{X}\to\mathbb{Y}}} \frac{1}{n} \sum_{i=1}^n \ell_{\text{train}}(h, x_i + \Delta x_i, y_i)
\end{equation}
\begin{equation}
\label{eq:minmin}
    \min_{\Delta x_i} \min_{h\in \set{\mathbb{X}\to\mathbb{Y}}} \frac{1}{n} \sum_{i=1}^n \ell_{\text{train}}(h, x_i + \Delta x_i, y_i)
\end{equation}
The intuition behind optimisation problem~(\ref{eq:maxmin}) is that if the training loss remains high and the model fails to fit the training data, then naturally not much information has been transferred to the model. Conversely, for optimisation problem~(\ref{eq:minmin}), the idea is that if the training loss is very low and the model overfits the training data, then what has been learned may not generalise well.

\subsection*{Problem definition}

Next, we define our problem. Although existing methods are shown to be relatively effective in some settings, they suffer from two main shortcomings. First, and most importantly, they work only if the entire training dataset is are unlearnable example, which is hardly practical. Second, these strategies do not come with a formal guarantee for unlearnability growth. To address the above-mentioned shortcoming, we thus aim to develop a theoretically grounded method such that it works even if only a small portion of the data is perturbed to be unlearnable.

\section{Unlearnable Example Construction}

We propose a novel approach for constructing unlearnable examples. We begin by formalising data unlearnability based on Bayes error, then introduce a differentiable estimate from samples. We further design a method to provably increase the Bayes error to hinder learning, and discuss whether such an approach works in the practical scenario where clean and unlearnable data coexist.

\subsection{Bayes error as unlearnability measurement}

We focus on the classification problem in this work. In such a setting, the Bayes error represents the minimal inevitable error of any classifier. It is defined as the expected error probability of the maximum a posteriori rule. Formally, given a distribution $D$ over $\mathbb{X}\times\mathbb{Y}$, the Bayes error (rate) of $D$ can be expressed~\cite{fukunaga1990introduction,garber1988bounds} as: 
\begin{equation}
\label{eq:bayeserror1}
    \operatorname{E}_{\mathbf{x} \sim D_\mathbf{x}}\left[1 - \max_c p(\mathbf{y}=c|\mathbf{x})\right]
\end{equation}

As shown in the optimisation problem~(\ref{eq:goal}), a model attempts to learn from data by minimising the prediction error, which is the objective that the defender aims to counteract by increasing this error. Since the Bayes error represents the minimal inevitable error of any classifier, it provides a fundamental upper bound on the wild model’s performance. Therefore, by increasing the Bayes error, the defender effectively lower the upper bound of the model's performance, which equivalently increases the unlearnability of the data, thereby enhancing the unlearnability of the data.

\subsection{Increasing Bayes error given finite samples. }

While Bayes error computation is straightforward with a known distribution (Eq.~(\ref{eq:bayeserror1})), it is not clear how to compute Bayes error from samples. To this end, we first present how to approximate Bayes error from samples, and then propose a method to systematically increase the Bayes error.

\paragraph{Estimating Bayes error through local posterior averaging}
Given only sampled data $\{(x_{i}, y_{i})\}_{i=1}^n \sim D$, rather than the distribution $D$, we do not have access to the true posterior $p_D(\textnormal{y} \mid \mathbf{x})$. The posterior may appear quantifiable through the following expression.
\begin{equation}
\label{eq:posterior_trivial}
    \hat{p}_D(\textnormal{y} \mid \mathbf{x}=x) = \frac{\sum_j  \iverson{(x_i = x) \land (\textnormal{y} = y_j)}}{\sum_i \iverson{x_i = x} }
\end{equation}
Eq.~(\ref{eq:posterior_trivial}) measures the frequency of each outcome. Yet, it is likely that some outcomes are never observed, e.g., when the $\mathbb{X}$ space is continuous, making the Eq.~(\ref{eq:posterior_trivial}) undefined. To address this issue, we present \cref{alg:posterior} to estimate Bayes error, where the similarity function $s$ can be defined according to the joint distribution pattern, e.g., a Gaussian kernel function
\begin{equation}
    s(x_1, x_2) = \exp(-\|x_1 - x_2\|^2/2\sigma^2).
\end{equation}

\begin{algorithm}[ht]
\caption{Local posterior averaging \label{alg:posterior}}
\begin{algorithmic}[1]
\Require Sampled data $\{(x_i, y_i)\}_{i=1}^n$, $\mathbb{Y}$ space, similarity function $s: \mathbb{X}\times\mathbb{X}\to \mathbb{R}_{\ge 0}$
\Ensure Posterior values $\{\hat{p}_D(\textnormal{y} \mid \mathbf{x}=x_i)\}_{i=1}^n$

\For{$i\gets 1, 2, \ldots, n$}
    \For{$c \in  \mathbb{Y}$}
    \State $\hat{p}_D(\textnormal{y}=c \mid \mathbf{x}=x_i) \gets \frac{\sum_{j=1, j\neq i}^n \iverson{y_i = c} \cdot s(x_j, x_i)}{\sum_{k=1, k\neq i}^n s(x_k, x_i)}$
    \EndFor
\EndFor
\end{algorithmic}
\end{algorithm}

Essentially, \cref{alg:posterior} takes sampled data as input and for each sampled $x$, it provides the posterior at that $x$. This then provides the sampled posterior values, and therefore the sampled maximum posteriors. Bayes error can thus be estimated as
\begin{equation}
\label{eq:bayes_estimate}
    \hat{\beta}_D = 1 - \frac{1}{n} \sum_i^n \max_c\hat{p}_D(\textnormal{y}=c \mid \mathbf{x}=x_i),
\end{equation}
which is the fixed-sample form of Eq.~(\ref{eq:bayeserror1}).
\begin{figure}
\centering
\begin{subfigure}[t]{0.5\linewidth}
\centering
\includegraphics[width=\linewidth]{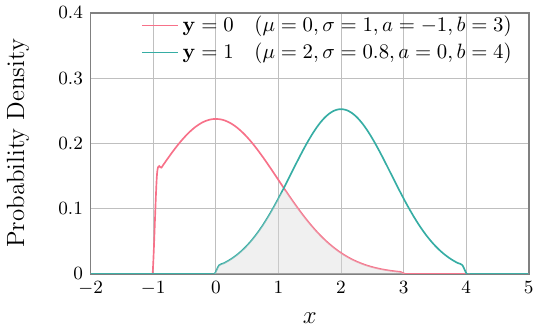}
\end{subfigure}
\begin{subfigure}[t]{0.437\linewidth}
\centering
\includegraphics[width=\linewidth]{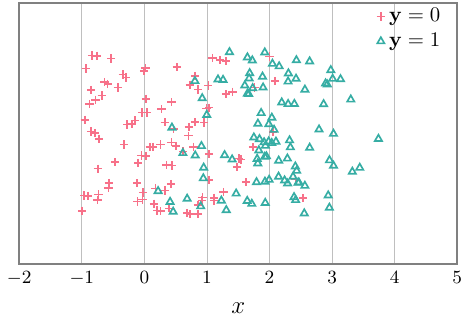}
\end{subfigure}

\caption{A joint distribution of two truncated normal distributions. (Left) The density function of this distribution. The grey-shaded area represents the Bayes error. (Right) A sample is drawn from this distribution. Note that the samples are randomly placed in the vertical direction.}
\label{fig:runex1}
\end{figure}
\begin{example}
Consider a binary classification problem in $\mathbb{R}$ with two truncated normal distributions, as illustrated in the \cref{fig:runex1}. The Bayes error of such a distribution is 0.1427, which can be analytically calculated from the probability density function (PDF) of truncated normal distributions. \cref{fig:runex1} (Right) illustrates sampled data drawn from this distribution. When estimating Bayes error from samples, a naive estimation like Eq.~(\ref{eq:posterior_trivial}) yields an undefined Bayes error, because there are no overlapping sampled outcomes. In contrast, when applying \cref{alg:posterior} and adopting the radial basis function~\cite{chang2010training}  for similarity, we get 0.1426 as an estimate of Bayes error, which is within a $\pm 0.1\%$ range of the analytic result.
\end{example}

In the following, we establish that the presented Bayes error estimate through local posterior averaging well approximates the analytic result.

\begin{lemma}
\label{lemma:estimate}
Let $s$ be a symmetric, non-negative similarity function satisfying the following conditions,
\begin{equation}
\begin{aligned}
    &\forall x'. \quad \int s(x, x')\, dx = 1,\\
    &\forall \alpha > 0. \quad\int_{\|x - x'\| > \alpha} s(x, x') \, dx \to 0 \quad \text{as}~~ n \to \infty,\\
    &\exists \kappa > 0.\quad \forall x, x', n.\quad 0 \le s(x, x') \le \kappa.
\end{aligned}
\end{equation} 
The posterior estimate from \cref{alg:posterior} converges to the true posterior, i.e.,
\begin{equation}
\begin{aligned}
    \operatorname{E}[|\hat{p}(\mathbf{y} = c \mid \mathbf{x}) - p(\mathbf{y} = c \mid \mathbf{x})|] \le 
    C_2\cdot\sigma^2 + \frac{C_1}{\sqrt{n} \sigma^{d/2}} .
\end{aligned}
\end{equation}
where $C_1$ and $C_2$ are constants.
\end{lemma}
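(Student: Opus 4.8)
The plan is to recognise the estimator of \cref{alg:posterior} as a Nadaraya--Watson kernel regression estimator and then run the textbook bias--variance analysis, with extra care for the fact that it is a ratio. Writing $s(\cdot,x)$ as a bandwidth-$\sigma$ kernel $K_\sigma(\cdot-x)$ (the first hypothesis is exactly the normalisation $\int K_\sigma=1$), I would set $\hat f(x)=\frac1n\sum_k s(x_k,x)$ and $\hat g_c(x)=\frac1n\sum_j\iverson{y_j=c}\,s(x_j,x)$, write $f$ for the marginal density of $\mathbf{x}$ and $g_c(x)=p(\mathbf{y}=c\mid\mathbf{x}=x)\,f(x)$, so that $\hat p=\hat g_c/\hat f$ estimates $p=g_c/f$. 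The two error terms in the claimed bound will come, respectively, from the kernel \emph{bias} (the $C_2\sigma^2$ term) and the estimator \emph{variance} (the $C_1 n^{-1/2}\sigma^{-d/2}$ term).

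First I would linearise the ratio via the identity $\hat p - p = \frac{\hat g_c - g_c}{\hat f} - \frac{p(\hat f - f)}{\hat f}$, which on the ``good'' event $\{\hat f(x)\ge\tfrac12 f(x)\}$ yields the pointwise bound $|\hat p - p|\le \frac{2}{f(x)}\bigl(|\hat g_c-g_c|+|\hat f-f|\bigr)$, using $0\le p\le1$. On the complementary event I would just use $|\hat p - p|\le 1$; its probability decays in $n$ by Chebyshev together with the variance estimate below, so its contribution to $\operatorname{E}|\hat p - p|$ is lower order. This step is where a mild regularity assumption enters (that $f$ is bounded away from zero on the region carrying the $D_\mathbf{x}$-mass, or an analogous integrability condition), so that the $1/f(x)$ factor survives the outer expectation over $\mathbf{x}$.

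Next I would bound $\operatorname{E}|\hat f - f|$ and $\operatorname{E}|\hat g_c - g_c|$ by splitting each into bias plus standard deviation. For the bias, $\operatorname{E}[\hat f(x)]=(K_\sigma * f)(x)$, and a second-order Taylor expansion of $f$ about $x$ --- with the symmetry of $s$ annihilating the first-order term and the concentration hypothesis controlling the remainder --- gives $|\operatorname{E}[\hat f(x)]-f(x)|=O(\sigma^2)$, and identically for $g_c$. For the variance, the i.i.d.\ summands give $\operatorname{Var}(\hat f(x))=\frac1n\operatorname{Var}(s(x_1,x))\le\frac1n\operatorname{E}[s(x_1,x)^2]=\frac1n\int K_\sigma(u-x)^2 f(u)\,du$, and a change of variables shows $\int K_\sigma^2=\Theta(\sigma^{-d})$, hence $\operatorname{Var}(\hat f(x))=O(1/(n\sigma^d))$ and $\operatorname{E}|\hat f - \operatorname{E}\hat f|\le\sqrt{\operatorname{Var}(\hat f(x))}=O(n^{-1/2}\sigma^{-d/2})$, again identically for $\hat g_c$. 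Plugging these into the good-event bound, taking the expectation over $\mathbf{x}\sim D_\mathbf{x}$, and collecting terms delivers $C_2\sigma^2 + C_1 n^{-1/2}\sigma^{-d/2}$.

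The main obstacle I anticipate is precisely the ratio structure: unlike a plain density estimate, $\hat p$ has a random denominator that can be arbitrarily small, so the naive bound diverges and the good-event/bad-event decomposition --- together with the accompanying lower bound on $f$ near the support --- is essential rather than cosmetic. The remaining work (the Taylor expansion for the $O(\sigma^2)$ bias, the $\Theta(\sigma^{-d})$ scaling of $\int K_\sigma^2$, and the i.i.d.\ variance bookkeeping) is routine kernel-estimation calculus, and the three stated hypotheses on $s$ are exactly what make those two scalings come out as claimed.
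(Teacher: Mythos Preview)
Your proposal is correct and follows essentially the same route as the paper: the paper states that its proof ``leverages the bias and variance derivation for the Nadaraya--Watson estimator,'' and your plan does exactly that, identifying $\hat p$ as a Nadaraya--Watson ratio, linearising, and reading off the $O(\sigma^2)$ bias and $O(n^{-1/2}\sigma^{-d/2})$ standard-deviation terms. Your explicit handling of the random denominator via the good-event/bad-event split and the accompanying regularity caveat on $f$ is the standard way to make that derivation rigorous, and is in line with what the paper invokes.
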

Proof for \cref{lemma:estimate}, which leverages the bias and variance derivation for the Nadaraya-Watson estimator~\cite{nadaraya1964estimating,watson1964smooth}, is provided in supplementary material.

\begin{theorem}
\label{thm:estimate}

With the posterior estimate from \cref{alg:posterior}, the Bayes error estimate converges to the true Bayes error at the rate 
$2\cdot C_2\cdot\abs{\mathbb{Y}} \sigma^2 + \frac{2\cdot C_1\cdot\abs{\mathbb{Y}}}{\sqrt{n \sigma^d}}$.
\end{theorem}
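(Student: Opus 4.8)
The plan is to push the posterior guarantee of \cref{lemma:estimate} through the $\max$ operator and the empirical average in Eq.~(\ref{eq:bayes_estimate}), paying only a union-bound factor over the label set $\mathbb{Y}$. Write $\beta_D=\operatorname{E}_{\mathbf{x}\sim D_{\mathbf{x}}}\!\big[1-\max_c p(\mathbf{y}=c\mid\mathbf{x})\big]$ for the true Bayes error and abbreviate $M(x)=\max_c p(\mathbf{y}=c\mid x)$, $\widehat{M}(x_i)=\max_c\hat p_D(\mathbf{y}=c\mid x_i)$. First I would split the estimation error as
\begin{equation}
\hat\beta_D-\beta_D=\underbrace{\Big(\operatorname{E}_{\mathbf{x}}[M(\mathbf{x})]-\tfrac1n\textstyle\sum_i M(x_i)\Big)}_{A}+\underbrace{\tfrac1n\textstyle\sum_i\big(M(x_i)-\widehat{M}(x_i)\big)}_{B}.
\end{equation}
Term $A$ is a centred average of i.i.d.\ quantities $M(x_i)\in[0,1]$, so $\operatorname{E}[|A|]\le\operatorname{Var}(M)^{1/2}/\sqrt n\le 1/(2\sqrt n)$, which is of lower order than the claimed $1/\sqrt{n\sigma^d}$ and can be absorbed into $C_1$; note also that $\operatorname{E}[A]=0$, so term $B$ alone already controls the bias $\operatorname{E}[\hat\beta_D]-\beta_D$.

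For term $B$ I would use the pointwise stability of the maximum: for probability vectors $(a_c)_{c\in\mathbb{Y}}$ and $(b_c)_{c\in\mathbb{Y}}$ with maximisers $c^\star$ and $\hat c^\star$, a short case split gives $|\max_c a_c-\max_c b_c|\le|a_{c^\star}-b_{c^\star}|+|a_{\hat c^\star}-b_{\hat c^\star}|\le 2\sum_{c\in\mathbb{Y}}|a_c-b_c|$, where the final $\sum_c$ removes the dependence of the maximiser on $\mathbf{x}$. Instantiating with $a_c=p(\mathbf{y}=c\mid x_1)$ and $b_c=\hat p_D(\mathbf{y}=c\mid x_1)$, using that the summands of $B$ are identically distributed, and then invoking \cref{lemma:estimate} once per class gives
\begin{equation}
\operatorname{E}[|B|]\le 2\sum_{c\in\mathbb{Y}}\operatorname{E}\!\big[\,|\hat p_D(\mathbf{y}=c\mid x_1)-p(\mathbf{y}=c\mid x_1)|\,\big]\le 2\abs{\mathbb{Y}}\Big(C_2\sigma^2+\frac{C_1}{\sqrt n\,\sigma^{d/2}}\Big),
\end{equation}
which combines with the bound on $A$ to give $\operatorname{E}[|\hat\beta_D-\beta_D|]\le 2C_2\abs{\mathbb{Y}}\sigma^2+\frac{2C_1\abs{\mathbb{Y}}}{\sqrt{n\sigma^d}}$, the stated rate.

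The step I expect to be the real obstacle is invoking \cref{lemma:estimate} legitimately inside term $B$. Although $\hat p_D(\mathbf{y}=c\mid x_i)$ is a function of the entire dataset, \cref{alg:posterior} builds it from the leave-one-out sample $\{x_j\}_{j\ne i}$, so $x_i$ is independent of the data feeding its own estimate and plays exactly the role of the fresh query point $\mathbf{x}$ in \cref{lemma:estimate}. To make this precise I would condition on $x_i$, apply \cref{lemma:estimate} to the inner expectation over $\{x_j\}_{j\ne i}$ for each fixed $c$, and only then integrate over $x_i\sim D_{\mathbf{x}}$; this requires the bound of \cref{lemma:estimate} to hold uniformly over the query point (or already in expectation over $\mathbf{x}$), which is how that lemma is set up. The remaining pieces --- the max-stability inequality, the union bound over $\mathbb{Y}$, and the variance control of the averaging term --- are routine.
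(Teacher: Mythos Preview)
Your proposal is correct and follows essentially the same route as the paper: bound $|\max_c a_c-\max_c b_c|$ by $2\sum_c|a_c-b_c|$, then invoke \cref{lemma:estimate} once per class to pick up the $|\mathbb{Y}|$ factor. The paper's proof is in fact terser than yours --- it writes $\hat\beta_D$ directly as an expectation over $\mathbf{x}$ and passes straight to the per-class bound, effectively suppressing your term $A$ and the leave-one-out independence discussion; your explicit treatment of both is a genuine improvement in rigour rather than a different argument.
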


\begin{proof}

From Eq.~(\ref{eq:bayeserror1}), the excess classification error $\abs{\hat{\beta}_D - \beta_D}$ can be expressed as
\begin{equation}
\begin{aligned}
    & \operatorname{E}_{\mathbf{x}}\left[1 - \max_c \hat{p}(\mathbf{y}=c|\mathbf{x})\right] -\operatorname{E}_{\mathbf{x} }\left[1 - \max_c p(\mathbf{y}=c|\mathbf{x})\right]\\
    & \le 2\cdot \operatorname{E}_{\mathbf{x} }\left[ \sum_c \left| p(\mathbf{y}=c|\mathbf{x}) - \hat{p}(\mathbf{y}=c|\mathbf{x}) \right|\right]
\end{aligned}
\end{equation}
Now apply the posterior estimation error bound  $ C_2 \sigma^2 + \frac{C_1}{\sqrt{n \sigma^d}}$ in \cref{lemma:estimate}, where $C_1$ and $C_2$ are constants. Then, we get $\abs{\hat{\beta}_D - \beta_D} \le  2\cdot C_2\cdot\abs{\mathbb{Y}} \sigma^2 + \frac{2\cdot C_1\cdot\abs{\mathbb{Y}}}{\sqrt{n \sigma^d}}$, still within the same asymptotic order.
\end{proof}

\paragraph{Bayes error maximisation under $L^p$-norm perturbation constraint}

Given sampled data $\{(x_{i}, y_{i})\}_{i=1}^n \sim D$, we would like to find a distribution $D'$ close to $D$ such that we have $\hat{\beta}_{D'} > \hat{\beta}_{D}$. Specifically, let $\{(x_{i}', y_{i}')\}_{i=1}^n \sim D'$ be sampled data from $D'$, and an $L^p$ closeness constraint between $D'$ and $D$ can be expressed as
\begin{equation}
    \forall i\in [1, n]\cap \mathbb{N}.\quad \left(y_i' = y_i\right) \land \left(\norm{x_i' - x_i}_p \le \epsilon\right)
\end{equation}
where $\epsilon > 0$ is the $x$ perturbation range. Since $y_i'$ always equals $y_i$ for all $i$, we use $y_i$ whenever $y_i'$ is needed. Applying Eq.~(\ref{eq:bayes_estimate}), we can estimate Bayes error of $D'$ as
\begin{equation}
\label{eq:bayes_estimate_perturbed}
    \widehat{\beta}_{D'} = \frac{1}{n} \sum_{i=1}^n \left(1 - \max_c \hat{p}_{D'}(\textnormal{y} = c \mid \mathbf{x}=x_i')\right).
\end{equation}

We are now ready to define our constrained optimisation problem of finding $D'$ as follows. Essentially, we aim to find  $x_1, \dots, x_n$ that maximise the Bayes error estimate, subject to an $L^p$ norm constraint.
\begin{equation}
\label{eq:optimisation}
\begin{aligned}
    \max_{\{x_i\}_{i=1}^n} \quad & \frac{1}{n} \sum_{i=1}^n \left(1 - \max_c \frac{\sum_{j=1, j\neq i}^n \iverson{y_i = c} \cdot s(x_j', x_i')}{\sum_{k=1, k\neq i}^n s(x_k', x_i')}\right) \\
    \text{s.t.} \quad & \|x_i' - x_i\|_p \le \epsilon, \quad \forall i = 1, 2, \dots, n
\end{aligned}
\end{equation}
Note that the objective function in Eq.~(\ref{eq:optimisation}) is an expanded form of Eq.~(\ref{eq:bayes_estimate_perturbed}) using \cref{alg:posterior}, and thus fully explicit given the original data $x_i, y_i$ and the similarity function $s$. This objective function is also computationally tractable and differentiable almost everywhere.

Next, we present the solution to the optimisation problem above. Direct gradient ascent is not applicable in constrained settings, as it does not guarantee that the variable in each iteration remains within the constrained set. To address this, we employ projected gradient ascent (PGA), a first-order iterative method that enforces feasibility via projection. PGA operates by performing a standard gradient ascent step followed by a projection onto the constrained set. The details of our approach are shown in~\cref{alg:pga}

\begin{algorithm}[ht]
\caption{Projected gradient ascent for Bayes error\label{alg:pga}}
\begin{algorithmic}[1]
\Require Sampled data $\{(x_i, y_i)\}_{i=1}^n$, $\mathbb{Y}$ space, similarity function $s: \mathbb{X}\times\mathbb{X}\to \mathbb{R}_{\ge 0}$, step size $\eta > 0$, $L^p$ constraint parameter $\epsilon$, maximum iterations $T$
\Ensure $\{(x_i'^{(T)}, y_i)\}_{i=1}^n$

\State Initialize $x_i'^{(0)} \gets \bm{0}$, $\forall i \in 1, 2, \ldots, n$
\For{$t \gets 0, 1, \ldots, T-1$}

    \For{$i = 1$ ... $n$}
        \State $\bm{g}_i^{(t)} \gets- \nabla_{x_i'}\left.\left( \sum_{i=1}^n \frac{\max_c \hat{p}_{D'}( c|x_i')}{n}\right)\right|_{\forall j. x_j' = x_j'^{(t)}}$
        \State $\bm{\delta}_i^{(t+1)} \gets x_i'^{(t)} + \eta \cdot \bm{g}_i^{(t)} -  x_i$
        \State $ \bm{\delta}_i^{(t+1)} \gets \arg\min_{\bm{\delta}', ~\|\bm{\delta}'\|_p \le \epsilon} \norm{\bm{\delta}' - \bm{\delta}_i^{(t+1)}}_2^2  $
        \State $x_i'^{(t+1)} \gets \bm{\delta}_i^{(t+1)} + x_i$
    \EndFor
\EndFor

\end{algorithmic}
\end{algorithm}

Essentially, each iteration of the gradient ascent algorithm requires computing gradients of the objective function (Bayes error estimate) with respect to all variable inputs $x_i'$ (line 4). Then, we use this gradient to shift the current $x_i'$ to the direction where the Bayes error grows (line 5). After that, we project the shift to the $L^p$ constraint to obtain a feasible next-step $x_i'$ (line 6). Line 6 in \cref{alg:pga} gives a general form expression for projection, and for $L^\infty$-norm, a closed-form projection can be expressed as $ \max(-\epsilon, \min(\bm{\delta}_i^{(t+1)}, \epsilon))$.

Note that gradient computation (Line 4 in \cref{alg:pga}) is a fundamental step for PGA to work. Eq.~(\ref{eq:gradient}) below shows the partial derivative of the objective function in Eq.~(\ref{eq:optimisation}) with respect to each $x_i$.

\begin{example}
\begin{figure}
    \centering
    \begin{subfigure}[t]{\linewidth}
\centering
\newcommand{\isIncluded}{}
\includegraphics[width=0.8\linewidth]{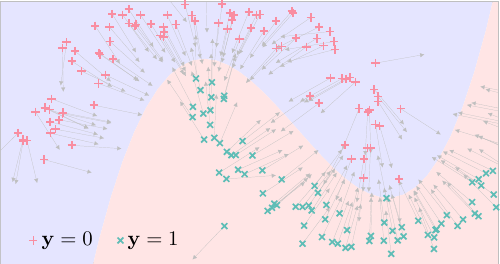}

\end{subfigure}

\caption{Perturbation of two-dimensional points.}
\label{fig:perturb}
\end{figure}
To illustrate the effect of perturbations, we present an example in \cref{fig:perturb}. Given sampled data from Moons (a simple Scikit-learn toy dataset to visualise classification~\cite{scikit-learn,chen2023evaluating}),  our goal is to perturb each point within an $L^\infty$ norm, e.g., $\norm{\Delta x_i}_\infty < 0.15$ in this example. Next, we apply the PGA procedure described earlier to obtain perturbations $\bm{\delta}_i^*$ for each sample, constrained by $\|\bm{\delta}_i\|_2 \leq \epsilon$, with $\epsilon = 0.15$.
Then, according to \cref{alg:pga}, we obtain the perturbed points. \cref{fig:perturb} illustrates the perturbation, as well as the perturbed points. We can then use \cref{alg:posterior} to find that the new Bayes error value is 0.1832, 28\% higher than the original one. Apart from the Bayes error growth, we can observe that while generally the movement of each point is towards the other class (increasing classification uncertainty), the travelled distance does not necessarily equal the perturbation constraint, i.e., 0.15 here. This suggests that in some cases, it is not that a larger shift is more optimal.

\end{example}

In the following, we establish that applying \cref{alg:pga} can lead to a guaranteed increase in the Bayes error.
\begin{lemma}
\label{lemma:pga}
Let $f:\mathbb{R}^d \to \mathbb{R}$ be a differentiable function with a \(\kappa\)-Lipschitz continuous gradient. Let \( \mathbb{C} \subseteq \mathbb{R}^d \) be a closed, convex set. Let the PGA update be defined as
\begin{equation}
    x^{(k+1)} = \Pi_{\mathbb{C}} \left( x^{(k)} + \eta \nabla f(x^{(k)}) \right)
\end{equation}
with \( x^{(0)} \in \mathbb{C} \), and step size \( \eta \in \left( 0, \frac{2}{\kappa} \right) \). Then for all \( k \geq 0 \),
\begin{equation}
f(x^{(k)}) \geq f(x^{(0)}).
\end{equation}
\end{lemma}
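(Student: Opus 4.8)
The plan is to prove the slightly stronger claim that the value sequence is monotone non-decreasing, i.e.\ $f(x^{(k+1)}) \ge f(x^{(k)})$ for every $k \ge 0$; the stated inequality then follows by chaining these estimates from $0$ up to $k$. A structural fact needed throughout is that every iterate is feasible: since $x^{(0)} \in \mathbb{C}$ and $\Pi_{\mathbb{C}}$ always returns a point of $\mathbb{C}$, a one-line induction gives $x^{(k)} \in \mathbb{C}$ for all $k$. Feasibility is exactly what will let me use $x^{(k)}$ itself as the test point in the projection inequality.

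The first ingredient is the two-sided descent lemma. Because $\nabla f$ is $\kappa$-Lipschitz, for all $u,v \in \mathbb{R}^d$ we have $\abs{f(v) - f(u) - \inner{\nabla f(u), v-u}} \le \tfrac{\kappa}{2}\norm{v-u}^2$, and in particular the lower bound $f(v) \ge f(u) + \inner{\nabla f(u), v-u} - \tfrac{\kappa}{2}\norm{v-u}^2$. Applying this with $u = x^{(k)}$ and $v = x^{(k+1)}$, and abbreviating $d_k := x^{(k+1)} - x^{(k)}$, the task reduces to producing a good lower bound on the linear term $\inner{\nabla f(x^{(k)}), d_k}$.

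The second ingredient is the obtuse-angle (variational) characterisation of Euclidean projection onto a closed convex set: writing $z^{(k)} := x^{(k)} + \eta\nabla f(x^{(k)})$ so that $x^{(k+1)} = \Pi_{\mathbb{C}}(z^{(k)})$, we have $\inner{z^{(k)} - x^{(k+1)}, w - x^{(k+1)}} \le 0$ for every $w \in \mathbb{C}$. Choosing $w = x^{(k)}$ (legitimate by the feasibility step) and substituting $z^{(k)} - x^{(k+1)} = \eta\nabla f(x^{(k)}) - d_k$ together with $w - x^{(k+1)} = -d_k$, the inequality rearranges to $\inner{\nabla f(x^{(k)}), d_k} \ge \tfrac{1}{\eta}\norm{d_k}^2$. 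Feeding this into the descent-lemma bound gives $f(x^{(k+1)}) - f(x^{(k)}) \ge \bigl(\tfrac{1}{\eta} - \tfrac{\kappa}{2}\bigr)\norm{d_k}^2$, and the hypothesis $\eta \in (0, 2/\kappa)$ makes the coefficient nonnegative, so $f(x^{(k+1)}) \ge f(x^{(k)})$; telescoping over the iterations completes the argument.

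I do not expect a genuine obstacle here — this is a standard sufficient-ascent estimate — but there are two places to be careful. First, the sign bookkeeping: Algorithm~\ref{alg:pga} performs an \emph{ascent} step with $x^{(k)} + \eta\nabla f(x^{(k)})$, so the projection inequality and the choice of lower (not upper) descent-lemma bound must be oriented accordingly. Second, one should record that the abstract lemma is applied in Algorithm~\ref{alg:pga} to the Bayes-error objective of Eq.~(\ref{eq:optimisation}) restricted to a neighbourhood on which it is differentiable with a Lipschitz gradient (it is differentiable almost everywhere and the iterates can be confined to such a region), with the convex set $\mathbb{C}$ being the $L^p$-ball around the original point; these are the hypotheses under which the monotonicity conclusion transfers to the algorithm.
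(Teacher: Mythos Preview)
Your argument is correct and is the standard one: combine the quadratic lower bound $f(v)\ge f(u)+\inner{\nabla f(u),\,v-u}-\tfrac{\kappa}{2}\norm{v-u}^2$ from the Lipschitz-gradient hypothesis with the obtuse-angle projection inequality (choosing the feasible point $w=x^{(k)}$) to obtain the sufficient-ascent estimate $f(x^{(k+1)})-f(x^{(k)})\ge\bigl(\tfrac{1}{\eta}-\tfrac{\kappa}{2}\bigr)\norm{d_k}^2\ge 0$, then telescope. The paper defers its proof of this lemma to the supplementary material, so a line-by-line comparison is not possible from the main text; but this is the canonical textbook derivation (and the paper itself cites Bertsekas for projected gradient), so your approach almost certainly coincides with theirs---your closing paragraph about the Bayes-error objective and the $L^p$-ball is really commentary on how \cref{thm:increase} invokes the lemma rather than part of the lemma's proof, and can be omitted here.
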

Proof for \cref{lemma:pga} is in supplementary materials. Next, we establish that our objective function satisfies the $\kappa$-Lipschitz condition.
\begin{theorem}
\label{thm:increase}
There exists a constant $\kappa$ such that for all $\eta \in (0, \frac{2}{\kappa})$, we have that the for any $\{(x_i', y_i')\}_{i=1}^n$ from \cref{alg:pga}, it is guaranteed that $\hat{\beta}_{D'}\ge \hat{\beta}_{D}$.
\end{theorem}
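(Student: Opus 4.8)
The plan is to deduce the theorem directly from \cref{lemma:pga} by (i) rewriting \cref{alg:pga} as a textbook PGA recursion on a closed convex set and (ii) exhibiting a finite Lipschitz constant $\kappa$ for the gradient of the Bayes-error objective on the region where the iterates live. Write $\bm{\delta} = (\delta_1,\dots,\delta_n)\in\mathbb{R}^{nd}$ for the stacked perturbations, put $x_i'(\bm{\delta}) = x_i + \delta_i$, and let
\[
  f(\bm{\delta}) \;=\; \hat{\beta}_{D'} \;=\; \frac1n\sum_{i=1}^n\Bigl(1 - \max_{c}\,\frac{\sum_{j\neq i}\iverson{y_j=c}\, s\bigl(x_j'(\bm{\delta}),\,x_i'(\bm{\delta})\bigr)}{\sum_{k\neq i} s\bigl(x_k'(\bm{\delta}),\,x_i'(\bm{\delta})\bigr)}\Bigr)
\]
be the objective of Eq.~(\ref{eq:optimisation}) seen as a function of $\bm{\delta}$, and let $\mathbb{C} = \{\bm{\delta}:\norm{\delta_i}_p\le\epsilon\ \text{for all } i\}$. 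Then $\mathbb{C}$ is a Cartesian product of $n$ closed $L^p$-balls, hence closed, bounded and convex for every $p\ge 1$; since Euclidean projection onto a product set decomposes block-wise, line~6 of \cref{alg:pga} is exactly $\Pi_{\mathbb{C}}$ (the per-block $\arg\min$). Because line~4 returns $\bm{g}_i^{(t)} = -\nabla_{x_i'}\bigl(\frac1n\sum_i\max_c\hat p_{D'}(c\mid x_i')\bigr) = \nabla_{\delta_i} f(\bm{\delta}^{(t)})$, lines~4--7 collapse in $\bm{\delta}$-coordinates to $\bm{\delta}^{(t+1)} = \Pi_{\mathbb{C}}\bigl(\bm{\delta}^{(t)} + \eta\,\nabla f(\bm{\delta}^{(t)})\bigr)$, precisely the recursion of \cref{lemma:pga}, started from the feasible reference point $\bm{\delta}^{(0)} = \bm{0}$ (zero perturbation, which reproduces the clean distribution $D$).

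It remains to produce $\kappa$. I would use compactness: starting from $\bm{\delta}^{(0)}=\bm{0}\in\mathbb{C}$ and projecting back into $\mathbb{C}$ after every step, every pre-projection iterate $\bm{\delta}^{(t)} + \eta\,\nabla f(\bm{\delta}^{(t)})$ lies in the fixed compact set $K = \{\bm{\delta}:\operatorname{dist}(\bm{\delta},\mathbb{C})\le\eta\, M\}$, where $M = \sup_{\mathbb{C}}\norm{\nabla f}<\infty$. On $K$ the objective is smooth with bounded second derivatives (away from the measure-zero set of class ties, a gap closed in the next paragraph): the Gaussian kernel $s$ and all of its derivatives are globally bounded; each denominator $Z_i(\bm{\delta}) = \sum_{k\ne i}s(x_k',x_i')$ is continuous and strictly positive on the compact $K$, hence $Z_i\ge z_{\min}>0$ there; so each posterior $\hat p_{D'}(c\mid x_i') = N_{ic}/Z_i$ is $C^2$ on $K$ by the quotient rule, with first- and second-order derivatives bounded in terms of $\sigma$, $\epsilon$, $n$, $z_{\min}$ and the kernel-derivative bounds. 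Averaging over the $n$ samples and the finitely many classes preserves these bounds, so $\nabla f$ is Lipschitz on $K$ with a constant $\kappa$ of that same (explicit but unwieldy) form; note \cref{lemma:pga} only needs Lipschitzness along the segments traversed by the iterates, all of which lie in $K$. Fixing this $\kappa$ and any $\eta\in(0,\tfrac2\kappa)$, \cref{lemma:pga} yields $f(\bm{\delta}^{(t)})\ge f(\bm{\delta}^{(0)})$ for all $t$; in particular the output satisfies $\hat{\beta}_{D'} = f(\bm{\delta}^{(T)})\ge f(\bm{0}) = \hat{\beta}_{D}$.

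The one genuinely delicate point — and the step I expect to be the main obstacle — is that the objective contains $\max_c$, which is only piecewise-$C^2$: at configurations where two classes tie for the largest local posterior, $f$ is not differentiable, so the ``smooth with bounded Hessian'' argument above applies only off a measure-zero set. The cleanest fix is to run the entire argument for the log-sum-exp surrogate $\operatorname{smax}_\tau(u) = \tfrac1\tau\log\sum_c e^{\tau u_c}$ in place of $\max_c$: it is $C^\infty$, satisfies $0\le\operatorname{smax}_\tau(u)-\max_c u_c\le\tfrac{\log\abs{\mathbb{Y}}}{\tau}$, and — crucially — its gradient and Hessian are bounded uniformly in $\tau$, so the construction above gives a single $\kappa$ valid for all $\tau$; PGA on the resulting $f_\tau$ gives $f_\tau(\bm{\delta}^{(T)})\ge f_\tau(\bm{0})$, and letting $\tau\to\infty$ recovers $\hat{\beta}_{D'}\ge\hat{\beta}_{D}$. (Alternative routes: argue that a generic PGA trajectory avoids the tie set and use the almost-everywhere gradient, or work with Clarke subgradients of a max of $C^{1,1}$ selections; but the softmax route is the least technical and is consistent with the ``differentiable almost everywhere'' framing already used for Eq.~(\ref{eq:optimisation}).)
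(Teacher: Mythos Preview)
Your strategy is the same as the paper's: verify that \cref{alg:pga} is literally the PGA recursion of \cref{lemma:pga} on the closed convex product of $L^p$-balls, check that the Bayes-error objective has a Lipschitz gradient, and read off monotonicity from \cref{lemma:pga}. The paper's proof follows exactly this outline, writing out the partial derivative (its Eq.~(\ref{eq:gradient})) and noting that the $L^p$ constraint set is closed and convex for $p\ge 1$.

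Where you differ is in the care taken on the Lipschitz step, and here your version is actually tighter than the paper's. The paper argues only that boundedness of $s$ together with a bound $\norm{\nabla_{x_i'} s}\le\kappa_2$ yields $\norm{\nabla_{x_i'} f_1}\le\kappa_3$; but bounded gradient gives Lipschitz $f_1$, not the Lipschitz \emph{gradient} that \cref{lemma:pga} requires. Your compactness argument (denominators bounded away from zero on $K$, hence each posterior $C^2$ with bounded Hessian) is the right way to close this, and it is what the paper's argument tacitly needs. Likewise, the paper simply writes the $\max_c$ inside the derivative formula without comment, whereas you flag the tie set and give the log-sum-exp smoothing to make the $\kappa$-smoothness honest. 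Both of these refinements are improvements over the published proof rather than departures from it; you should feel free to keep them, noting that the underlying skeleton matches the paper.
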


\begin{proof}
We consider the objective function in Eq.~(\ref{eq:optimisation}) as a function $f_1: \mathbb{X}^n\times\mathbb{Y}^n \to \mathbb{R}$, and the perturbation range as a set \( \mathbb{C} \subseteq \mathbb{X}^n \). First, we show that this objective function is a differentiable function with a \(\kappa\)-Lipschitz continuous gradient. Specifically, the partial derivative (of $f_1$) with respect to each variable can be expressed as Eq.~(\ref{eq:gradient}).
\begin{equation}
\label{eq:gradient}
\begin{aligned}
    &\nabla_{x_i'}\left(\frac{1}{n} \sum_{i=1}^n \left(1 - \max_c \frac{\sum_{j=1, j\neq i}^n \iverson{y_i = c} \cdot s(x_j', x_i')}{\sum_{k=1, k\neq i}^n s(x_k', x_i')}\right)\right)\\
    &= -\frac{1}{n}\sum_{j=1, j\neq i}^n  \left(\frac{1}{(\sum_k s(x_i', x_k'))^2} \cdot \max_c \sum_k ( \iverson{y_j = c} -\right.\\
    &\hspace{23mm}\iverson{y_k = c} ) s(x_i, x_k)+ \max_c \sum_k ( \iverson{y_i = c} - \\
    &\iverson{y_k = c} ) \left.\frac{1}{(\sum_k s(x_j', x_k'))^2} \cdot s(x_j, x_k)\right) \nabla_{x_i'} s(x_i', x_j')
\end{aligned}
\end{equation}
Observe that the function $s$ satisfies $\exists \kappa_1 > 0.\quad \forall x, x'.\quad 0 \le s(x, x') \le \kappa_1$. Thus, as long as it is not the case that $\forall i, j, s(x_i, x_j) = 0$, then we get that
\begin{equation}
    \left(\exists \kappa_2.~ \norm{\nabla_{x_i'} s(x_i', x_j') } \le \kappa_3\right) \to \left(\exists \kappa_3.~ \norm{\nabla_{x_i'} f_1 } \le \kappa_3\right).
\end{equation}
That is, if the similarity function is $\kappa_2$-Lipschitz, then the objective function of Eq.~(\ref{eq:gradient}) is $\kappa_3$-Lipschitz. Next, we show that the perturbation range $\mathbb{C}$ is a closed, convex set. Namely, for all orders greater than 0, $L^p$ norm is closed, and for all orders greater than or equal to 1, $L^p$ norm is convex. Therefore, applying \cref{lemma:pga}, we show that for all \( k \ge 0 \):
\begin{equation}
    f_1(x_1^{(k)},\ldots x_n^{(k)}, y_1, \ldots) \geq f_1(x_1^{(0)},\ldots x_n^{(0)}, y_1, \ldots).
\end{equation}
\end{proof}
\paragraph{Dealing with embedded inputs}

The previous discussion is based on a discrete representation of samples whose distance is easy to measure. As for complicated inputs, e.g., coloured images, we could first compute their embedding vector and then solve the Bayes error maximisation problem. Specifically, let $m:\mathbb{X}_\text{ori}\to \mathbb{X}_\text{emb}$ denote the embedding function from original input space $\mathbb{X}_\text{ori}$ to the embedding space $\mathbb{X}_\text{emb}$. In this case, their similarity is measured in the embedding space, and the constraint is still applied to the original input space, i.e., Eq.~(\ref{eq:optimisation}) becomes Eq.~(\ref{eq:optimisation-embed}) and $m$ needs to be known.
\begin{equation}
\label{eq:optimisation-embed}
\begin{aligned}
    \min_{\{x_i\}_{i=1}^n} ~~ & \sum_{i=1}^n \left(\max_c \frac{\sum_{j=1, j\neq i}^n \iverson{y_i = c} \cdot s(m(x_j'), m(x_i'))}{\sum_{k=1, k\neq i}^n s(m(x_k'), m(x_i'))}\right) \\
    \text{s.t.} ~~ & \|x_i' - x_i\|_p \le \epsilon, \quad \forall i = 1, 2, \dots, n
\end{aligned}
\end{equation}

\subsection{Mixing Clean and Unlearnable Examples}

In practice, models are unlikely to be trained solely on the perturbed unlearnable data. To model this situation, we adjust Eq.~(\ref{eq:optimisation}) by letting some perturbation be 0. That is, for some (but not all) $i$ in $1, 2, \ldots, n$, it is restricted that $x_i' = x_i$. Intuitively, this could be a case where the adversary could leverage publicly available data to augment the training, whereas the defender cannot modify all the public data. In the following, we show in \cref{cor:mix} that \cref{alg:pga} can still effectively increase Bayes error.

\begin{corollary}
    \label{cor:mix}
    Given $\{(x_i, y_i)\}_{i=1}^{n}$, let $\mathbb{I}_n\subset \mathbb{N}\cap [1, n], \mathbb{I}_n\neq \empty$ be a subset of all indices. Suppose for all indices $j$ in $\mathbb{I}_n$, it is restricted that $\Delta x_j = 0$. It follows that the remaining perturbations can increase Bayes error, i.e., the following condition is satisfied.
    \begin{equation}
    \begin{aligned}
        \left(\forall j \in \mathbb{I}_n.~\Delta x_j = 0 \right) \land\\
        \left(1 - \frac{1}{n} \sum_i^n \max_c\hat{p}_D(\textnormal{y}=c \mid \mathbf{x}=x_i')\right. \ge\\
        \left.1 - \frac{1}{n} \sum_i^n \max_c\hat{p}_D(\textnormal{y}=c \mid \mathbf{x}=x_i)\right)
    \end{aligned}
    \end{equation}
Proof for \cref{cor:mix} is in supplementary materials.
\end{corollary}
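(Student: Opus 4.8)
The plan is to recognise that the mixed-data setting is nothing but the constrained optimisation of \cref{thm:increase} carried out over a \emph{smaller} feasible set, so that \cref{lemma:pga} applies essentially verbatim. Concretely, I would first recast the restriction ``$\Delta x_j = 0$ for $j \in \mathbb{I}_n$'' as a change of the feasible region: replace $\mathbb{C} = \prod_{i=1}^n B_p(x_i,\epsilon)$ by
\[
  \mathbb{C}' \;=\; \Big(\textstyle\prod_{j\in\mathbb{I}_n}\{x_j\}\Big)\;\times\;\Big(\textstyle\prod_{i\notin\mathbb{I}_n}B_p(x_i,\epsilon)\Big),
\]
so that the free coordinates still range over their $\epsilon$-balls while the protected coordinates are pinned to their clean values. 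Each singleton $\{x_j\}$ is closed and convex, each $B_p(x_i,\epsilon)$ is closed for $p>0$ and convex for $p\ge 1$, and a finite product of closed convex sets is closed and convex; hence $\mathbb{C}'$ still satisfies the hypotheses of \cref{lemma:pga}. The objective $f_1$ from Eq.~(\ref{eq:optimisation}) is literally unchanged, so the argument in the proof of \cref{thm:increase} still shows it is differentiable with a $\kappa$-Lipschitz gradient for the \emph{same} constant $\kappa$ — the bounds $0\le s\le\kappa_1$ and the Lipschitzness of $\nabla s$ that underlie that estimate do not refer to which coordinates are allowed to move.

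Next I would verify that running \cref{alg:pga} with the protected perturbations held at $0$ is exactly projected gradient ascent of $f_1$ over $\mathbb{C}'$ with step size $\eta\in(0,2/\kappa)$: the per-coordinate projection in line~6 sends the increment of every protected coordinate to $0$ (projection onto a singleton) and performs the usual $L^p$-ball projection on the free coordinates, so the iterates remain in $\mathbb{C}'$. I would take the initial iterate to be the clean configuration $x^{(0)} = (x_1,\dots,x_n)$, which lies in $\mathbb{C}'$, and observe that $f_1(x^{(0)}) = \hat{\beta}_D$ because all perturbations are then zero. Applying \cref{lemma:pga} with $f = f_1$, $\mathbb{C} = \mathbb{C}'$ and this $\eta$ yields $f_1(x^{(k)}) \ge f_1(x^{(0)})$ for all $k\ge 0$. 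Since $f_1$ evaluated at the iterate $x^{(k)}$ is precisely the Bayes error estimate $\hat{\beta}$ of the corresponding partially perturbed dataset, taking $D'$ to be the dataset output at iteration $T$ gives $\hat{\beta}_{D'} \ge \hat{\beta}_D$, together with $\Delta x_j = 0$ for all $j\in\mathbb{I}_n$ by construction — which is exactly the asserted conjunction.

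The main obstacle is essentially bookkeeping rather than a new idea. I would be careful on three points: that the feasible set stays closed and convex after pinning coordinates (immediate from the product structure); that the Lipschitz estimate for $f_1$ in \cref{thm:increase} is genuinely uniform over the choice of which coordinates vary (it is, being derived only from $0\le s\le\kappa_1$ and $\kappa_2$-Lipschitzness of $s$, with the usual caveat that $s$ is not identically zero); and that \cref{alg:pga} is initialised at a \emph{feasible} point, so I would explicitly start the trajectory from the clean data point rather than from $\bm{0}$ — this is the one place where a literal reading of the pseudocode could violate the $x^{(0)}\in\mathbb{C}'$ hypothesis of \cref{lemma:pga}. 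Finally, since only the non-strict inequality is claimed, no argument about strict improvement is needed, and in particular stationary points of $f_1$ on $\mathbb{C}'$ (which would merely leave $\hat{\beta}$ unchanged) cause no difficulty.
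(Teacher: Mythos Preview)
Your proposal is correct and matches the natural approach the paper's structure dictates: since the statement is labelled a corollary of \cref{thm:increase}, the intended proof is to restrict the feasible region by pinning the $\mathbb{I}_n$-coordinates, observe that the resulting product set is still closed and convex, note that the Lipschitz bound on $\nabla f_1$ from \cref{thm:increase} is unaffected, and reapply \cref{lemma:pga}. Your careful remark about initialising at the clean configuration (so that $x^{(0)}\in\mathbb{C}'$) is a genuine point the pseudocode obscures, and it is worth keeping in any write-up.
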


\section{Experiments}
\begin{figure*}[t]
    \centering
    \begin{subfigure}[t]{0.24\linewidth}
\centering
\includegraphics[width=\textwidth]{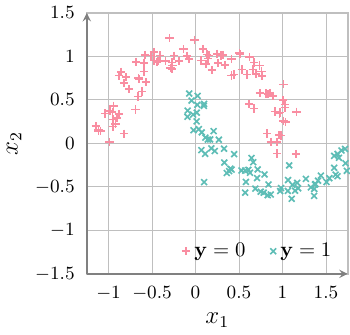}
\caption{Moons ($\hat{\beta} = 0.1434$)\label{fig:moons-original}}
\end{subfigure}
    \begin{subfigure}[t]{0.24\linewidth}
\centering
\includegraphics[width=\textwidth]{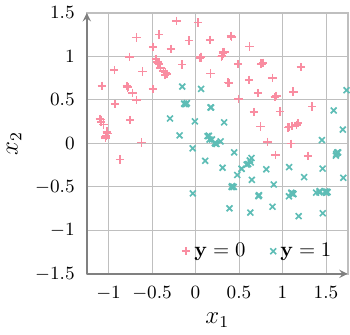}
\caption{Moons ($\hat{\beta} = 0.1888$)\label{fig:moons-unlearnable}}
\end{subfigure}
    \begin{subfigure}[t]{0.24\linewidth}
\centering
\includegraphics[width=\textwidth]{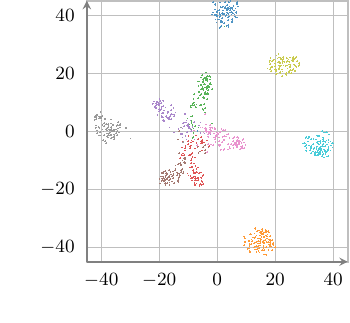}
\caption{CIFAR-10 ($\hat{\beta} = 0.0565$)\label{fig:cifar10-original}}
\end{subfigure}
    \begin{subfigure}[t]{0.24\linewidth}
\centering
\includegraphics[width=\textwidth]{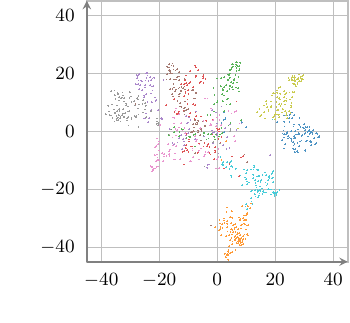}
\caption{CIFAR-10 ($\hat{\beta} = 0.2453$)\label{fig:cifar10-unlearnable}}
\end{subfigure}

\caption{Examples before and after perturbation}
\label{fig:rq1}
\end{figure*}
\begin{figure}[t]
    \centering
    \includegraphics[width=\linewidth]{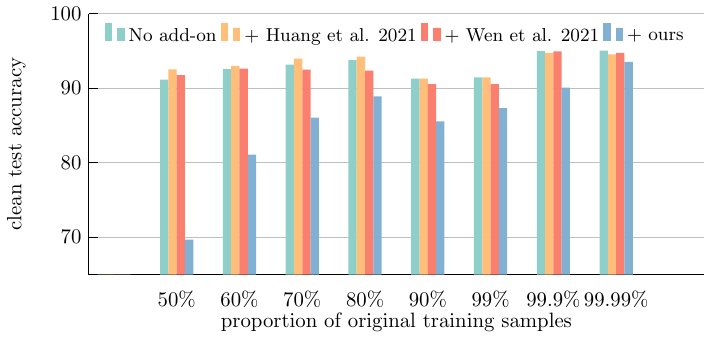}
    \caption{Test accuracy on each setting. ``No add-on'' means only training on part of the original training set.}
    \label{fig:rq2}
\end{figure}
We conduct experiments to evaluate the empirical effectiveness of the proposed method, aiming to answer the following research questions (RQs).

\begin{enumerate}
    \item  How does the proposed perturbation affect Bayes error? 
    \item Does training on our unlearnable examples lead to reduced test accuracy? What happens when only a subset of the training data is made unlearnable?
    \item How does our method perform against adaptive attacks such as adversarial training?
\end{enumerate}

\paragraph{Experimental settings}

We follow standard experimental settings as established in previous work~\cite{huang2021unlearnable,wen2023adversarial}, and evaluate our method on three image classification benchmarks: CIFAR-10, CIFAR-100~\cite{krizhevsky2009learning}, and Tiny ImageNet~\cite{le2015tiny}. Perturbations are constrained within an $L^\infty$-norm bound of $\epsilon = 8/255$ per image. For feature extraction, we adopt the commonly used ResNet-18 model~\cite{he2016deep,robey2022probabilistically,zhang2019theoretically,wang2019improving}. All models are trained for 100 epochs using stochastic gradient descent~\cite{robbins1951stochastic} with a learning rate of 0.1, momentum of 0.9, weight decay of $5\times 10^{-4}$, and cosine annealing learning rate scheduling~\cite{loshchilov2017sgdr}. For our optimisation implementation, we formulate an auto-differentiable Bayes error objective in PyTorch~\cite{paszke2019pytorch}, enabling gradient computation in the input space via back propagation. Additional experimental details are provided in the corresponding subsections.

\subsection{RQ1: The change in Bayes error}

We empirically study how the Bayes error changes after perturbing data inputs using our method. As an intuitive illustration, we first conduct the experiment on Moons (a simple toy dataset to visualise classification~\cite{scikit-learn,chen2023evaluating}), which consists of two interleaving half circles. A sample of 200 points is shown in \cref{fig:moons-original}. We then apply \cref{alg:pga} with $\epsilon = 0.25$ to construct unlearnable examples, resulting in the distribution shown in \cref{fig:moons-unlearnable}. The perturbed examples appear more scattered within each class, and examples from different classes become closer. This reduces inter-class separation, leading to a 24.3\% increase in Bayes error, from 0.1434 (original) to 0.1888 (unlearnable).

Next, we extend our experiment to high-dimensional image data. As shown in \cref{fig:cifar10-original}, the t-SNE visualization of the original CIFAR-10 training set exhibits well-separated class features. After applying \cref{alg:pga}, the resulting unlearnable dataset (visualized in \cref{fig:cifar10-unlearnable}) shows significant feature overlap across classes, making the data intuitively less learnable. Correspondingly, the Bayes error increases significantly from 0.0565 to 0.2453,  representing more than a 3.3x increase. These results confirm that our method effectively increases the Bayes error.

\subsection{RQ2: Effectiveness of unlearnable examples}

To evaluate the effectiveness of our unlearnable examples, we use them for training and then measure the test accuracy. Test accuracy is an empirical measurement of unlearnability. Intuitively, if the test accuracy remains high, the training data may not be sufficiently unlearnable. As a reference, training on the full original CIFAR-10 (training) set yields 95.12\% test accuracy.

We first train on our entire unlearnable set and observe a test accuracy of 28.12\% on CIFAR-10, showcasing an over 70\% accuracy drop from 95.12\%. Then, we explore mixed-data settings, e.g., replacing half of the original training samples with unlearnable ones results in 69.68\% test accuracy, with a 27\% drop from 95.12\%. To further isolate the effect of unlearnable examples, we train only on the original half set without any unlearnable data add-on, which achieves 91.16\% accuracy, over 30\% higher than 69.68\%. This suggests including unlearnable examples degrades model performance.

We iteratively adjust the proportion of original/unlearnable data in the mixture to reflect realistic scenarios where protected data forms a minority. As shown in \cref{fig:rq2}, we report test accuracy under varying ratios of kept original samples  (\# original / \# original and unlearnable) ratios. We include two representative baseline methods, one by minimising training loss and the other by maximising training loss. Specifically, \citet{huang2021unlearnable} introduce label-correlated noise to create shortcut features that hinder meaningful learning. In contrast, \citet{wen2023adversarial} identify the embedding centre of each class and push individual examples away from it, reducing intra-class coherence.

Compared with baseline methods, our method consistently induces greater accuracy drops, on average, 8\% more than \citet{huang2021unlearnable} and 9\% more than \citet{wen2023adversarial}. More importantly, our examples cause a consistent accuracy decline across all settings. In contrast, previous methods may occasionally act as extra training data and even improve accuracy when mixed in (e.g., 50\%~\cite{huang2021unlearnable}). These results demonstrate that our unlearnable examples are not only effective in full training scenarios but also degrade more when only used as a part.

\paragraph{Effectiveness on complex datasets}

\begin{figure}[t]
    \centering
    \includegraphics[width=\linewidth]{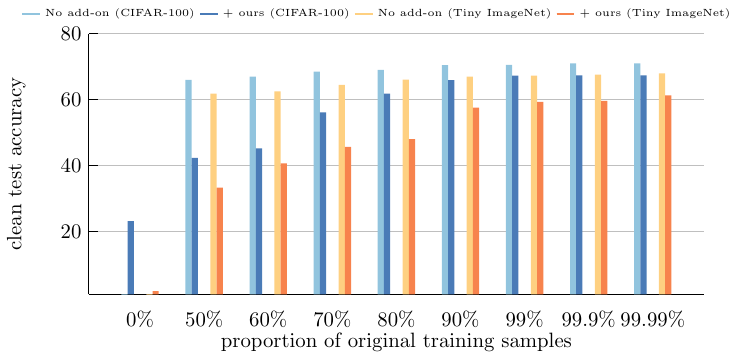}
    \caption{Test accuracy for complex datasets}
    \label{fig:change-dataset}
\end{figure}
\cref{fig:change-dataset} illustrates experimental results on more complex datasets, CIFAR-100 and Tiny ImageNet. Across all mixture ratios, including the case when fully training on unlearnable examples, our constructed examples consistently cause a noticeable drop in test accuracy. These results further confirm the general effectiveness of our method on larger and more challenging datasets.

\paragraph{Transferability to unseen model architectures}

\begin{table}[t]
\centering
\begin{tabular}{@{}>{\centering\arraybackslash}p{0.50\linewidth}|cc}
\toprule
Model architecture & clean & ours \\\midrule
ResNet-18    & 95.04 & 28.12 \\
ResNet-34    & 95.12 & 29.86 \\
VGG-19       & 91.70 & 18.14 \\
DenseNet-121 & 95.06 & 25.89 \\
MobileNet v2 & 93.30 & 30.54 \\ \bottomrule
\end{tabular}
\caption{Test accuracy when unlearnable examples are used in training with various architectures
\label{tab:change-architecture}
}
\end{table}

To transform image data into an embedding space, we use a feature extractor ResNet-18~\cite{he2016deep}. A natural question is whether the constructed unlearnable examples remain effective when a different model architecture is used for training. To investigate this, we follow \citet{wen2023adversarial} and evaluate four alternative architectures, with results summarised in \cref{tab:change-architecture}. Across all architectures, training on our CIFAR-10 unlearnable examples consistently yields lower accuracy compared to training on clean data. The average accuracy drop is 68\% ($\pm$10\%), closely matching the drop observed with ResNet-18 (28.12\%, $\pm$5\%), suggesting that the effectiveness of our examples generalises well across model architectures.

\subsection{RQ3: Resistance to adversarial training}
\begin{figure}[t]
    \centering
    \includegraphics[width=\linewidth]{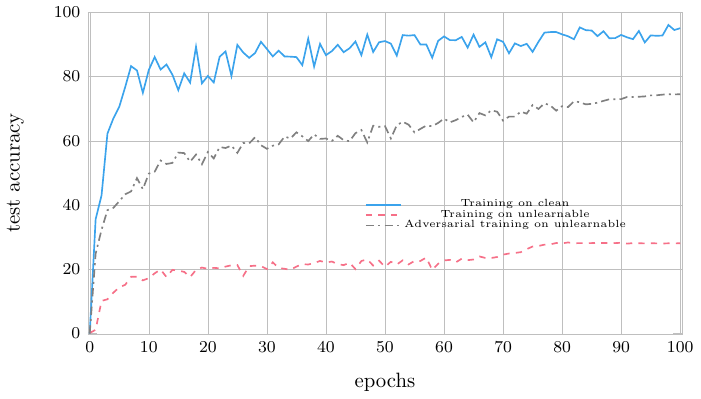}
    \caption{Test accuracy at each iteration}
    \label{fig:adaptive}
\end{figure}

Due to their potential public release, once constructed, unlearnable examples are no longer amendable by the defender. This may place them at a disadvantage, especially when it is known that the data has been intentionally perturbed. In such cases, countermeasures such as adversarial training may be applied to forcibly extract information from these examples. To evaluate the effectiveness of our approach in this scenario, we conduct experiments using PGD-based adversarial training~\cite{madry2017towards,wen2023adversarial}, a commonly adopted adaptive strategy, on our constructed unlearnable examples. \cref{fig:adaptive} presents model performance over training iterations. Compared to standard training on clean data, our unlearnable examples consistently lead to reduced accuracy. Notably, under adversarial training, the model achieves only 74.5\% test accuracy, which is 21.5\% lower than training on original examples. This substantial drop renders the resulting model largely unusable in practice.

\section{Related Work}
The idea of protecting data at the source has gained increasing attention due to the rising concerns over unauthorised data usage. \citet{huang2021unlearnable} and \citet{ren2022transferable} introduced the concept of unlearnable examples, where imperceptible perturbations are applied to make data inherently difficult to learn. Approaches to constructing unlearnable examples have typically followed two heuristics: (1) maximising the training loss to prevent models from fitting useful features~\cite{wen2023adversarial,fowl2021adversarial}, and (2) minimising it to encourage overfitting on non-generalizable patterns~\cite{huang2021unlearnable,fu2022robust}. There are also other approaches~\cite{liu2024game,liu2024stable,liu2023reliable,mu2025bayesian}, including directly optimising perturbations for specific model parameters~\cite{lu2023exploring} or target retraining scenarios using sharpness-aware techniques~\cite{he2023sharpness}. Such created unlearnable examples have been found to work ineffectively when mixed with clean data and lack formal guarantees. This work thus aims to tackle the problem of ensuring unlearnability even in the mixed data situation. 

Estimating the Bayes error of a given data distribution has long been an interesting topic~\cite{fukunaga1975k}. Related methods include estimating its lower or upper bounds, e.g. the Bhattacharyya distance~\cite{fukunaga1990introduction} or the Henze-Penrose divergence~\cite{berisha2015empirically,sekeh2020learning}. Alternatively, there are attempts to estimate the Bayes error using generative models~\cite{kingma2018glow,theisen2021evaluating} or real-world human annotation~\cite{renggli2021evaluating}. Indirect Bayes error estimation has also been adopted by quantity substitution, e.g., accuracy to robustness~\cite{zhang2024certified,zhang2024does}. Compared with these estimates, our Bayes error estimate is easier to compute, such as to model and improve unlearnability.

\section{Conclusion}
We propose a formal objective to increase unlearnability by increasing Bayes error, and a projected gradient ascent strategy to ensure its growth in practice. Experiments confirm our method consistently reduces test accuracy even if the training data is a mixture of clean and unlearnable ones. Overall, it offers an effective approach to user data protection. In the future, an interesting direction is to extend our framework to protect data from generative models, such as generative adversarial networks.

\section*{Acknowledgements}
This research is supported by the Ministry of Education, Singapore under its Academic Research Fund Tier 3 (Award ID: MOET32020-0004).


\begin{thebibliography}{44}
\providecommand{\natexlab}[1]{#1}

\bibitem[{Anonymous(2025)}]{code}
Anonymous. 2025.
\newblock {Unlearnable Examples (GitHub repository)}.
\newblock \url{https://github.com/cat-claws/unlearnable}.
\newblock Accessed: 2025‑11‑11.

\bibitem[{Berisha et~al.(2016)Berisha, Wisler, Hero, and Spanias}]{berisha2015empirically}
Berisha, V.; Wisler, A.; Hero, A.~O.; and Spanias, A. 2016.
\newblock Empirically Estimable Classification Bounds Based on a Nonparametric Divergence Measure.
\newblock \emph{IEEE Transactions on Signal Processing}, 64(3): 580--591.

\bibitem[{Bertsekas(2003)}]{bertsekas2003goldstein}
Bertsekas, D.~P. 2003.
\newblock On the Goldstein-Levitin-Polyak gradient projection method.
\newblock \emph{IEEE Transactions on automatic control}, 21(2): 174--184.

\bibitem[{Chang et~al.(2010)Chang, Hsieh, Chang, Lin et~al.}]{chang2010training}
Chang, Y.-W.; Hsieh, C.-J.; Chang, K.-W.; Lin, C.-J.; et~al. 2010.
\newblock Training and testing low-degree polynomial data mappings via linear svm.
\newblock \emph{Journal of Machine Learning Research}, 11(4).

\bibitem[{Chen et~al.(2023)Chen, Cao, Xing, and Liang}]{chen2023evaluating}
Chen, Q.; Cao, F.; Xing, Y.; and Liang, J. 2023.
\newblock Evaluating Classification Model Against Bayes Error Rate.
\newblock \emph{IEEE Transactions on Pattern Analysis and Machine Intelligence}, 45(8): 9639--9653.

\bibitem[{Fowl et~al.(2021)Fowl, Goldblum, Chiang, Geiping, Czaja, and Goldstein}]{fowl2021adversarial}
Fowl, L.; Goldblum, M.; Chiang, P.-y.; Geiping, J.; Czaja, W.; and Goldstein, T. 2021.
\newblock Adversarial examples make strong poisons.
\newblock \emph{Advances in Neural Information Processing Systems}, 34: 30339--30351.

\bibitem[{Fu et~al.(2022)Fu, He, Liu, Shen, and Tao}]{fu2022robust}
Fu, S.; He, F.; Liu, Y.; Shen, L.; and Tao, D. 2022.
\newblock Robust unlearnable examples: Protecting data against adversarial learning.
\newblock \emph{arXiv preprint arXiv:2203.14533}.

\bibitem[{Fukunaga(1990)}]{fukunaga1990introduction}
Fukunaga, K. 1990.
\newblock \emph{Introduction to Statistical Pattern Recognition (2nd Ed.)}.
\newblock USA: Academic Press Professional, Inc.
\newblock ISBN 0122698517.

\bibitem[{Fukunaga and Hostetler(1975)}]{fukunaga1975k}
Fukunaga, K.; and Hostetler, L. 1975.
\newblock k-nearest-neighbor Bayes-risk estimation.
\newblock \emph{IEEE Transactions on Information Theory}, 21(3): 285--293.

\bibitem[{Garber and Djouadi(1988)}]{garber1988bounds}
Garber, F.; and Djouadi, A. 1988.
\newblock Bounds on the Bayes classification error based on pairwise risk functions.
\newblock \emph{IEEE Transactions on Pattern Analysis and Machine Intelligence}, 10(2): 281--288.

\bibitem[{He et~al.(2016)He, Zhang, Ren, and Sun}]{he2016deep}
He, K.; Zhang, X.; Ren, S.; and Sun, J. 2016.
\newblock Deep Residual Learning for Image Recognition.
\newblock In \emph{2016 IEEE Conference on Computer Vision and Pattern Recognition (CVPR)}, 770--778.

\bibitem[{He et~al.(2023)He, Xu, Ren, Cui, Liu, Aggarwal, and Tang}]{he2023sharpness}
He, P.; Xu, H.; Ren, J.; Cui, Y.; Liu, H.; Aggarwal, C.~C.; and Tang, J. 2023.
\newblock Sharpness-aware data poisoning attack.
\newblock \emph{arXiv preprint arXiv:2305.14851}.

\bibitem[{Huang et~al.(2021)Huang, Ma, Erfani, Bailey, and Wang}]{huang2021unlearnable}
Huang, H.; Ma, X.; Erfani, S.~M.; Bailey, J.; and Wang, Y. 2021.
\newblock Unlearnable Examples: Making Personal Data Unexploitable.
\newblock In \emph{International Conference on Learning Representations}.

\bibitem[{Kaplan et~al.(2020)Kaplan, McCandlish, Henighan, Brown, Chess, Child, Gray, Radford, Wu, and Amodei}]{kaplan2020scaling}
Kaplan, J.; McCandlish, S.; Henighan, T.; Brown, T.~B.; Chess, B.; Child, R.; Gray, S.; Radford, A.; Wu, J.; and Amodei, D. 2020.
\newblock Scaling Laws for Neural Language Models.
\newblock \emph{CoRR}, abs/2001.08361.

\bibitem[{Kingma and Dhariwal(2018)}]{kingma2018glow}
Kingma, D.~P.; and Dhariwal, P. 2018.
\newblock Glow: Generative Flow with Invertible 1x1 Convolutions.
\newblock In Bengio, S.; Wallach, H.; Larochelle, H.; Grauman, K.; Cesa-Bianchi, N.; and Garnett, R., eds., \emph{Advances in Neural Information Processing Systems}, volume~31. Curran Associates, Inc.

\bibitem[{Krizhevsky, Hinton et~al.(2009)}]{krizhevsky2009learning}
Krizhevsky, A.; Hinton, G.; et~al. 2009.
\newblock Learning multiple layers of features from tiny images.

\bibitem[{Le and Yang(2015)}]{le2015tiny}
Le, Y.; and Yang, X. 2015.
\newblock Tiny imagenet visual recognition challenge.
\newblock \emph{CS 231N}, 7(7): 3.

\bibitem[{Liu, Peng, and Tang(2023)}]{liu2023reliable}
Liu, S.; Peng, F.; and Tang, K. 2023.
\newblock Reliable robustness evaluation via automatically constructed attack ensembles.
\newblock In \emph{Proceedings of the AAAI Conference on Artificial Intelligence}, volume~37, 8852--8860.

\bibitem[{Liu, Wang, and Gao(2024)}]{liu2024game}
Liu, S.; Wang, Y.; and Gao, X.-S. 2024.
\newblock Game-theoretic unlearnable example generator.
\newblock In \emph{Proceedings of the AAAI Conference on Artificial Intelligence}, volume~38, 21349--21358.

\bibitem[{Liu et~al.(2024)Liu, Xu, Chen, and Sun}]{liu2024stable}
Liu, Y.; Xu, K.; Chen, X.; and Sun, L. 2024.
\newblock Stable unlearnable example: Enhancing the robustness of unlearnable examples via stable error-minimizing noise.
\newblock In \emph{Proceedings of the AAAI Conference on Artificial Intelligence}, volume~38, 3783--3791.

\bibitem[{Loshchilov and Hutter(2017)}]{loshchilov2017sgdr}
Loshchilov, I.; and Hutter, F. 2017.
\newblock {SGDR}: Stochastic Gradient Descent with Warm Restarts.
\newblock In \emph{International Conference on Learning Representations}.

\bibitem[{Lu, Kamath, and Yu(2023)}]{lu2023exploring}
Lu, Y.; Kamath, G.; and Yu, Y. 2023.
\newblock Exploring the limits of model-targeted indiscriminate data poisoning attacks.
\newblock In \emph{International Conference on Machine Learning}, 22856--22879. PMLR.

\bibitem[{Madry et~al.(2018)Madry, Makelov, Schmidt, Tsipras, and Vladu}]{madry2017towards}
Madry, A.; Makelov, A.; Schmidt, L.; Tsipras, D.; and Vladu, A. 2018.
\newblock Towards Deep Learning Models Resistant to Adversarial Attacks.
\newblock In \emph{6th International Conference on Learning Representations, {ICLR} 2018, Vancouver, BC, Canada, April 30 - May 3, 2018, Conference Track Proceedings}. OpenReview.net.

\bibitem[{Mu and Lim(2025)}]{mu2025bayesian}
Mu, W.; and Lim, K.~H. 2025.
\newblock Bayesian Privacy Guarantee for User History in Sequential Recommendation Using Randomised Response.
\newblock In \emph{Proceedings of the 34th ACM International Conference on Information and Knowledge Management}, CIKM '25, 5041–5046. New York, NY, USA: Association for Computing Machinery.
\newblock ISBN 9798400720406.

\bibitem[{Nadaraya(1964)}]{nadaraya1964estimating}
Nadaraya, E.~A. 1964.
\newblock On estimating regression.
\newblock \emph{Theory of Probability \& Its Applications}, 9(1): 141--142.

\bibitem[{Nakashima et~al.(2022)Nakashima, Kataoka, Matsumoto, Iwata, Inoue, and Satoh}]{nakashima2022can}
Nakashima, K.; Kataoka, H.; Matsumoto, A.; Iwata, K.; Inoue, N.; and Satoh, Y. 2022.
\newblock Can vision transformers learn without natural images?
\newblock In \emph{Proceedings of the AAAI conference on artificial intelligence}, volume~36, 1990--1998.

\bibitem[{Paszke et~al.(2019)Paszke, Gross, Massa, Lerer, Bradbury, Chanan, Killeen, Lin, Gimelshein, Antiga, Desmaison, Kopf, Yang, DeVito, Raison, Tejani, Chilamkurthy, Steiner, Fang, Bai, and Chintala}]{paszke2019pytorch}
Paszke, A.; Gross, S.; Massa, F.; Lerer, A.; Bradbury, J.; Chanan, G.; Killeen, T.; Lin, Z.; Gimelshein, N.; Antiga, L.; Desmaison, A.; Kopf, A.; Yang, E.; DeVito, Z.; Raison, M.; Tejani, A.; Chilamkurthy, S.; Steiner, B.; Fang, L.; Bai, J.; and Chintala, S. 2019.
\newblock PyTorch: An Imperative Style, High-Performance Deep Learning Library.
\newblock In Wallach, H.; Larochelle, H.; Beygelzimer, A.; d\textquotesingle Alch\'{e}-Buc, F.; Fox, E.; and Garnett, R., eds., \emph{Advances in Neural Information Processing Systems 32}, volume~32, 8024--8035. Curran Associates, Inc.

\bibitem[{Pedregosa et~al.(2011)Pedregosa, Varoquaux, Gramfort, Michel, Thirion, Grisel, Blondel, Prettenhofer, Weiss, Dubourg, Vanderplas, Passos, Cournapeau, Brucher, Perrot, and Duchesnay}]{scikit-learn}
Pedregosa, F.; Varoquaux, G.; Gramfort, A.; Michel, V.; Thirion, B.; Grisel, O.; Blondel, M.; Prettenhofer, P.; Weiss, R.; Dubourg, V.; Vanderplas, J.; Passos, A.; Cournapeau, D.; Brucher, M.; Perrot, M.; and Duchesnay, E. 2011.
\newblock Scikit-learn: Machine Learning in {P}ython.
\newblock \emph{Journal of Machine Learning Research}, 12: 2825--2830.

\bibitem[{Ren et~al.(2022)Ren, Xu, Wan, Ma, Sun, and Tang}]{ren2022transferable}
Ren, J.; Xu, H.; Wan, Y.; Ma, X.; Sun, L.; and Tang, J. 2022.
\newblock Transferable unlearnable examples.
\newblock \emph{arXiv preprint arXiv:2210.10114}.

\bibitem[{Renggli et~al.(2021)Renggli, Rimanic, Hollenstein, and Zhang}]{renggli2021evaluating}
Renggli, C.; Rimanic, L.; Hollenstein, N.; and Zhang, C. 2021.
\newblock Evaluating Bayes Error Estimators on Real-World Datasets with FeeBee.
\newblock In Vanschoren, J.; and Yeung, S., eds., \emph{Proceedings of the Neural Information Processing Systems Track on Datasets and Benchmarks 1, NeurIPS Datasets and Benchmarks 2021, December 2021, virtual}.

\bibitem[{Robbins and Monro(1951)}]{robbins1951stochastic}
Robbins, H.; and Monro, S. 1951.
\newblock A stochastic approximation method.
\newblock \emph{The annals of mathematical statistics}, 400--407.

\bibitem[{Robey et~al.(2022)Robey, Chamon, Pappas, and Hassani}]{robey2022probabilistically}
Robey, A.; Chamon, L.; Pappas, G.~J.; and Hassani, H. 2022.
\newblock Probabilistically Robust Learning: Balancing Average and Worst-case Performance.
\newblock In Chaudhuri, K.; Jegelka, S.; Song, L.; Szepesvari, C.; Niu, G.; and Sabato, S., eds., \emph{Proceedings of the 39th International Conference on Machine Learning}, volume 162 of \emph{Proceedings of Machine Learning Research}, 18667--18686. PMLR.

\bibitem[{Seddik and Tamaazousti(2022)}]{seddik2022neural}
Seddik, M. E.~A.; and Tamaazousti, M. 2022.
\newblock Neural networks classify through the class-wise means of their representations.
\newblock In \emph{Proceedings of the AAAI Conference on Artificial Intelligence}, volume~36, 8204--8211.

\bibitem[{Sekeh, Oselio, and Hero(2020)}]{sekeh2020learning}
Sekeh, S.~Y.; Oselio, B.; and Hero, A.~O. 2020.
\newblock Learning to Bound the Multi-Class Bayes Error.
\newblock \emph{IEEE Transactions on Signal Processing}, 68: 3793--3807.

\bibitem[{Theisen et~al.(2021)Theisen, Wang, Varshney, Xiong, and Socher}]{theisen2021evaluating}
Theisen, R.; Wang, H.; Varshney, L.~R.; Xiong, C.; and Socher, R. 2021.
\newblock Evaluating State-of-the-Art Classification Models Against Bayes Optimality.
\newblock In Ranzato, M.; Beygelzimer, A.; Dauphin, Y.; Liang, P.; and Vaughan, J.~W., eds., \emph{Advances in Neural Information Processing Systems}, volume~34, 9367--9377. Curran Associates, Inc.

\bibitem[{Wang et~al.(2020)Wang, Zou, Yi, Bailey, Ma, and Gu}]{wang2019improving}
Wang, Y.; Zou, D.; Yi, J.; Bailey, J.; Ma, X.; and Gu, Q. 2020.
\newblock Improving Adversarial Robustness Requires Revisiting Misclassified Examples.
\newblock In \emph{International Conference on Learning Representations}.

\bibitem[{Watson(1964)}]{watson1964smooth}
Watson, G.~S. 1964.
\newblock Smooth regression analysis.
\newblock \emph{Sankhy{\=a}: The Indian Journal of Statistics, Series A}, 359--372.

\bibitem[{Wen et~al.(2023)Wen, Zhao, Liu, Backes, Wang, and Zhang}]{wen2023adversarial}
Wen, R.; Zhao, Z.; Liu, Z.; Backes, M.; Wang, T.; and Zhang, Y. 2023.
\newblock Is Adversarial Training Really a Silver Bullet for Mitigating Data Poisoning?
\newblock In \emph{The Eleventh International Conference on Learning Representations}.

\bibitem[{Wu et~al.(2018)Wu, Hughes, Parbhoo, Zazzi, Roth, and Doshi-Velez}]{wu2018beyond}
Wu, M.; Hughes, M.; Parbhoo, S.; Zazzi, M.; Roth, V.; and Doshi-Velez, F. 2018.
\newblock Beyond Sparsity: Tree Regularization of Deep Models for Interpretability.
\newblock \emph{Proceedings of the AAAI Conference on Artificial Intelligence}, 32(1).

\bibitem[{Yeom et~al.(2018)Yeom, Giacomelli, Fredrikson, and Jha}]{yeom2018privacy}
Yeom, S.; Giacomelli, I.; Fredrikson, M.; and Jha, S. 2018.
\newblock Privacy risk in machine learning: Analyzing the connection to overfitting.
\newblock In \emph{2018 IEEE 31st computer security foundations symposium (CSF)}, 268--282. IEEE.

\bibitem[{Zhang et~al.(2019)Zhang, Yu, Jiao, Xing, Ghaoui, and Jordan}]{zhang2019theoretically}
Zhang, H.; Yu, Y.; Jiao, J.; Xing, E.; Ghaoui, L.~E.; and Jordan, M. 2019.
\newblock Theoretically Principled Trade-off between Robustness and Accuracy.
\newblock In Chaudhuri, K.; and Salakhutdinov, R., eds., \emph{Proceedings of the 36th International Conference on Machine Learning}, volume~97 of \emph{Proceedings of Machine Learning Research}, 7472--7482. PMLR.

\bibitem[{Zhang and Sun(2024{\natexlab{a}})}]{zhang2024certified}
Zhang, R.; and Sun, J. 2024{\natexlab{a}}.
\newblock Certified Robust Accuracy of Neural Networks Are Bounded Due to Bayes Errors.
\newblock In Gurfinkel, A.; and Ganesh, V., eds., \emph{Computer Aided Verification}, 352--376. Cham: Springer Nature Switzerland.
\newblock ISBN 978-3-031-65630-9.

\bibitem[{Zhang and Sun(2024{\natexlab{b}})}]{zhang2024does}
Zhang, R.; and Sun, J. 2024{\natexlab{b}}.
\newblock How Does Bayes Error Limit Probabilistic Robust Accuracy.
\newblock \emph{arXiv preprint arXiv:2405.14923}.

\bibitem[{Zhang(2004)}]{zhang2004statistical}
Zhang, T. 2004.
\newblock {Statistical behavior and consistency of classification methods based on convex risk minimization}.
\newblock \emph{The Annals of Statistics}, 32(1): 56 -- 85.

\end{thebibliography}

\end{document}